\documentclass{article}
\PassOptionsToPackage{sort,numbers}{natbib}

\usepackage[preprint]{neurips_2026}

\usepackage[utf8]{inputenc} 
\usepackage[T1]{fontenc}    
\usepackage{hyperref}       
\usepackage{url}            
\usepackage{booktabs}       
\usepackage{amsfonts}       
\usepackage{nicefrac}       
\usepackage{microtype}      
\usepackage{xcolor}         
\usepackage{amsmath}
\usepackage{amsthm}
\usepackage{cleveref}
\usepackage{graphicx} 
\usepackage{enumitem}
\usepackage{tcolorbox}

\newtheorem{definition}{Definition}

\newtheorem{theorem}{Theorem}
\newtheorem{corollary}{Corollary}

\DeclareMathOperator*{\argmax}{arg\,max}

\title{How Data Shapes RoPE Frequency Usage: From Positional Scale Matching to Length Generalization}

\author{
Xinyi Wu$^{\ast1}$\qquad 
Siyuan Liu$^{\ast\dagger2}$\qquad 
Ali Jadbabaie$^{1}$\\
$^1$MIT IDSS \qquad $^2$IIIS, Tsinghua University\\
\texttt{\{xinyiwu,jadbabai\}@mit.edu}
\quad \texttt{liusiyua23@mails.tsinghua.edu.cn}}

\begin{document}

\maketitle

\begingroup
\renewcommand{\thefootnote}{\fnsymbol{footnote}}
\footnotetext[1]{Equal contribution. \quad $^\dagger$ Work done during visit at MIT.}
\endgroup

\vspace{-2ex}
\begin{abstract}
Rotary Position Embeddings (RoPE) provide transformers with a fixed grid of positional frequencies, yet trained models use these frequencies highly non-uniformly. We study what determines this frequency usage and propose a data-centered explanation: RoPE frequencies are selected to match the relative-distance structure of the training data. Viewing each frequency as a positional lens, we formalize a field-resolution tradeoff and show that, for a data-induced dependency profile of width \(W\), the optimal frequency scales as \(1/W\). This frequency-matching principle explains controlled observations on synthetic and text-based data, and suggests that the mid-low frequency bands observed in language models arise from the multi-scale dependency structure of natural language. We further connect frequency selection to position-interpolation-based length generalization: scaling frequencies down expands the effective field while reducing resolution. This helps when longer-context dependencies are approximate dilations of those seen during training, but can fail when relevant dependencies do not scale with context length. Empirically, we show that natural language exhibits approximate self-similarity across positional scales, explaining why test-time frequency scaling can support long-context generalization. Overall, our results identify a data-driven mechanism behind emergent RoPE frequency usage and show that long-context generalization depends on two forms of scale matching: between learned frequencies and training-time dependencies, and between frequency scaling and how those dependencies extend to longer contexts. 
\end{abstract}


\vspace{-3ex}
\section{Introduction}
\vspace{-1ex}

A longstanding view in machine learning is that generalization depends on inductive biases, or preferences that prioritize certain solutions over others when data is finite~\cite{Mitchell1980NeedFB,Battaglia2018RelationalIB}. One of the central questions in modern machine learning is how such biases arise in models designed to be broadly flexible~\cite{Tancik2020FourierFL,Wu2025OnTE}. Attention is a striking example~\cite{Vaswani2017AttentionIA,Bahdanau2014NeuralMT,Kim2017StructuredAN}; although every token can in principle attend to every other token, trained attention models are far from structureless. They exhibit strong positional preferences~\cite{Wu2025OnTE,liu2024lost,Guo2024SerialPE}, attention sinks~\cite{Gu2024WhenAS,Xiao2023EfficientSL}, head specialization~\cite{Olsson2022IncontextLA}, and highly non-uniform use of embedding dimensions~\cite{Jin2025MassiveVI,Barbero2024RoundAR,oka2026frequency}. These phenomena suggest that training does more than fit a predictor: it induces systematic preferences in how a flexible architecture uses its internal degrees of freedom.

Rotary Position Embeddings (RoPE) provide a particularly clean instance of this phenomenon~\cite{su2023roformerenhancedtransformerrotary}. RoPE gives a model access to a fixed grid of positional frequencies, yet trained models do not use these frequencies uniformly. Instead, query and key representations often concentrate their norm in restricted frequency bands~\cite{Barbero2024RoundAR,Jin2025MassiveVI,oka2026frequency}. This raises a fundamental question:
\begin{quote}
\vspace{-1ex}
     \hspace{4ex} \emph{What determines which RoPE frequencies a model learns to use?}
\vspace{-1ex}
\end{quote}

One possible explanation is that high RoPE frequencies encode position while low frequencies encode semantics~\cite{Barbero2024RoundAR}. This intuition is partly motivated by the fact that very low RoPE frequencies vary slowly over the training context and are therefore nearly invariant to small changes in position. However, low sensitivity to position should not be mistaken for non-positional information. A low frequency in RoPE still modulates attention scores as a function of relative distance; it simply varies more gradually across the context. Moreover, prior work shows that changing the training sequence length alone can shift RoPE frequency usage~\cite{oka2026frequency}, suggesting that frequency selection depends on characteristics of the training data rather than on a fixed semantic-versus-positional partition.

In this work, we propose a data-centered theory of RoPE frequency selection. The key object is a data-induced positional dependency profile: a relative-distance profile describing where task-relevant information lies in the sequence. Under this view, RoPE frequencies act as positional lenses. Higher frequencies provide sharper local resolution, but become ambiguous over shorter ranges because their phases wrap quickly. Lower frequencies cover broader ranges, but distinguish nearby positions more coarsely. We formalize this field-resolution tradeoff and show that, for a dependency profile of width \(W\), the optimal admissible frequency scales as \(1/W\). Thus, the frequency selected by training is matched to the positional scale of the dependencies present in the data.

This perspective also clarifies when position interpolation (PI) enables length generalization. PI rescales RoPE frequencies by \(\theta \mapsto \theta/\alpha\) to extend a model from length \(L\) to length \(\alpha L\)~\cite{chen2023extending}. We show that this operation expands the effective field of each frequency by a factor of \(\alpha\), while reducing its local resolution by the same factor. Consequently, PI should help when the test-time dependency structure is approximately a stretched version of the training-time structure. We formalize this condition through self-similarity of dependency profiles. Under this condition, PI preserves frequency utility and maps the optimal training-scale frequency to the optimal longer-context frequency. This provides a mechanism for why PI can work on natural language, whose dependencies exhibit approximate self-similarity across scales~\cite{Alabdulmohsin2024FractalPM}, and why its benefits are limited on tasks whose relevant dependencies do not scale with context length.

Taken together, our results suggest a broader view of emergent positional structure in attention-based models. RoPE does not merely provide generic position information; it exposes a spectrum of positional lenses, each with a distinct field-resolution tradeoff. Training selects among these lenses according to the dependency structures present in the data. The resulting frequency usage is therefore a learning-induced positional inductive bias: not hand-coded in the architecture, but shaped by the training distribution and encoded in the learned query-key weights.

\textbf{Our contributions are summarized as follows:}
\vspace{-1ex}
\begin{itemize}[leftmargin=2ex]
\item We show that RoPE frequency usage is a learned, input-stable, and training-data-dependent property of trained transformers, and introduce an attention score-level energy measure to quantify frequency usage.

\item We develop a data-centered theory of RoPE frequency selection. By analyzing how RoPE frequencies provide positional contrast over data-induced dependency profiles, we prove a frequency-matching principle: for a dependency profile of width \(W\), the optimal admissible frequency scales as \(\theta^\star \asymp 1/W\). This explains why broad dependency structures favor lower frequencies while local dependencies favor higher frequencies, and we validate the prediction on controlled synthetic and text-based data.

\item We connect frequency selection to length generalization by position interpolation. We show that PI expands the effective field of each frequency while reducing its resolution, and is effective when test-time dependencies are approximately stretched versions of training-time dependencies. This predicts when PI should help: it can enable length generalization when relevant dependencies scale with context length, as in natural-language generation, but can fail or offer limited benefits when such scaling structure is absent.
\end{itemize}

\begin{figure}[t]
\centering
\includegraphics[width=\linewidth]{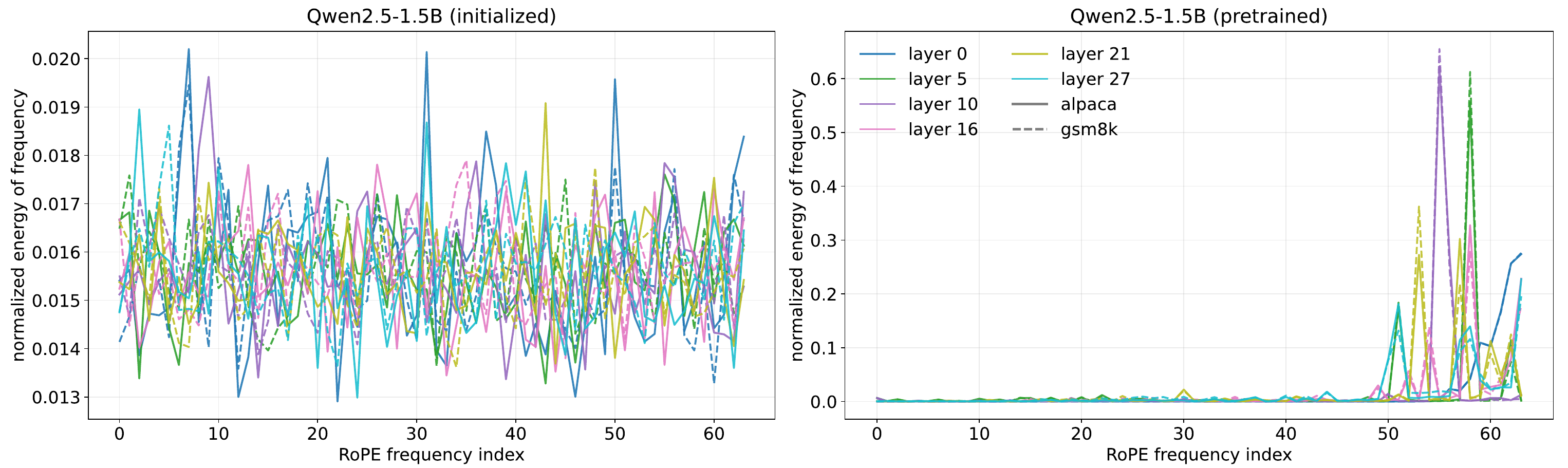}
\caption{\small Frequency energy (averaged over heads) of Qwen-2.5-1.5B model tested on alpaca and gsm8k datasets. The frequency usage concentrates on low frequencies after training and remains stable across prompts. }
\label{fig:input_stable}
\end{figure}

\vspace{-2ex}
\section{RoPE frequency usage is learned, input-stable and training data-dependent}
\vspace{-1ex}

\begin{figure}[t]
    \centering
    \includegraphics[width=\linewidth]{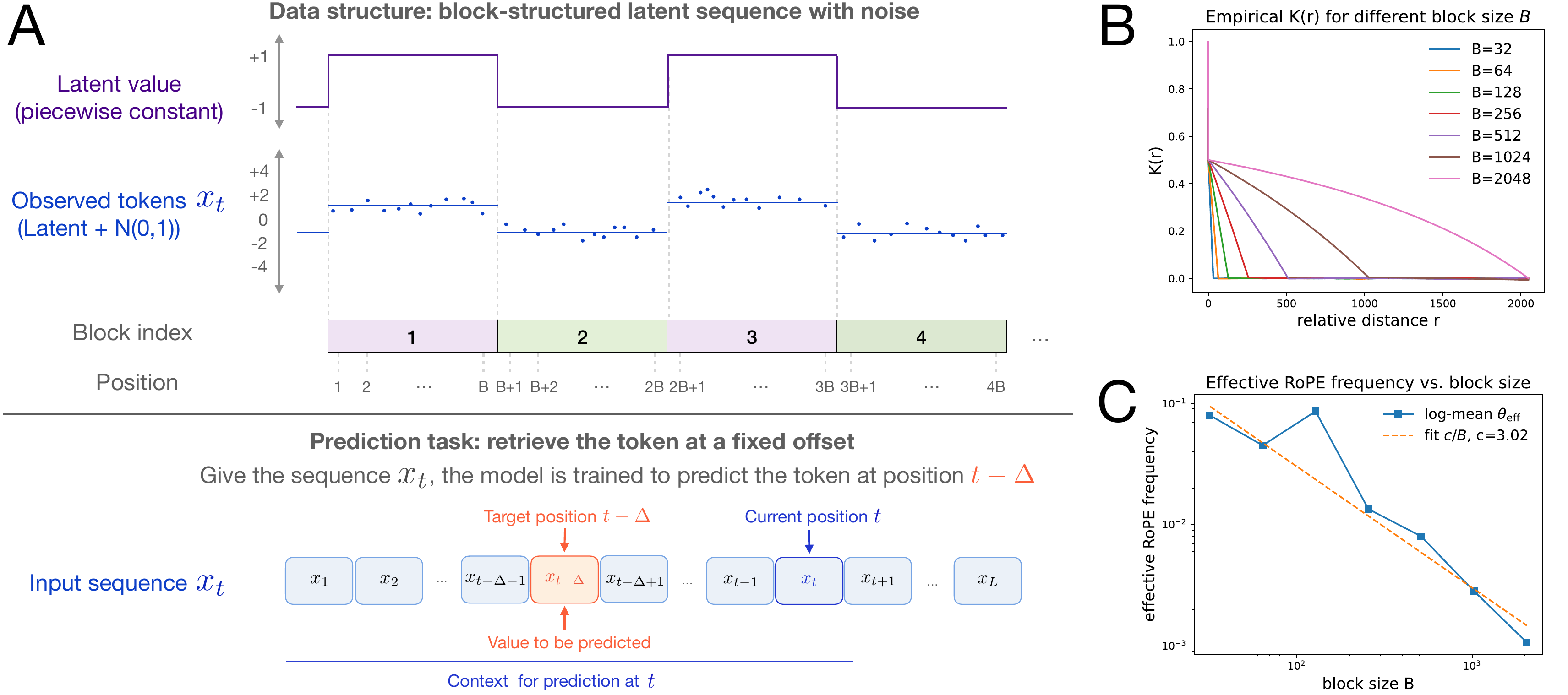}
     \caption{\small
\textbf{Block-structured data reveals frequency matching between RoPE usage and data  dependency width.}
(A) Synthetic block-drift data. Each sequence consists of latent blocks of length \(B\), where the latent value is piecewise constant and sampled uniformly from \(\{+1,-1\}\); observed tokens are obtained by adding independent unit Gaussian noise. The model is trained to predict the token at a fixed offset \(t-\Delta\) from the current position \(t\).
(B) Empirical data dependency kernels \(K(r)\), measured by autocorrelation, for different block sizes. As the dependency width equals the block length ($W=B$), increasing \(B\) broadens the dependency width $W$. 
(C) Learned RoPE frequency usage follows the predicted inverse scaling law. The log-mean effective frequency decreases as block size increases, and a fit of the form \(c/B\) gives \(c=3.02\), close to the theoretical constant \(\pi\) in~\Cref{thm:field_constrained_optimal_frequency}. This provides controlled evidence that broader dependency widths select lower RoPE frequencies.
}
    \label{fig:block_freq}
    \vspace{-4ex}
\end{figure}

RoPE equips attention with a fixed grid of positional frequencies~\cite{su2023roformerenhancedtransformerrotary}. Yet trained models use this grid in a highly non-uniform way: query and key embeddings often concentrate their norm in restricted frequency bands~\cite{Barbero2024RoundAR,Jin2025MassiveVI,oka2026frequency}. Does this concentration come from the architecture and initialization, from input-specific variation, or from structure learned during training? We first disentangle these possibilities empirically.

\vspace{-1ex}
\paragraph{Measuring RoPE frequency usage}
Prior work measures RoPE frequency usage through the norms of query and key embeddings in each RoPE dimension pair~\cite{Barbero2024RoundAR,Jin2025MassiveVI,oka2026frequency}. Since RoPE affects attention through relative rotations of queries and keys, we instead use a score-level measure that directly quantifies each frequency's contribution to the raw attention score. For each RoPE frequency \(\theta_m\), the contribution to the raw attention score between query \(q_i\) and key \(k_j\) can be written as
\vspace{-2ex}

\[
    s_{ij}^{(m)}
    =
    a_{ij,m}\cos(\theta_m(i-j))
    +
    b_{ij,m}\sin(\theta_m(i-j)),
\]
where \(a_{ij,m} = q_{i1}^{m}k_{j1}^{m} + q_{i2}^{m}k_{j2}^{m}\)  and \(b_{ij,m} = q_{i1}^{m}k_{j2}^{m} - q_{i2}^{m}k_{j1}^{m}\) depend on the query and key components associated with $\theta_m$, $q^m,k^m\in\mathbb{R}^2$. We define the energy of frequency \(\theta_m\) in an attention head as the squared amplitude of this sinusoidal contribution:
\vspace{-1ex}
\[
    E_m
    =
    \mathbb E_{i,j,x}
    \left[
        a_{ij,m}^2+b_{ij,m}^2
    \right],
\]
where the expectation is taken over inputs $x$ and token pairs $(i,j)$. We refer to the normalized collection \(\{E_m / \sum_{m'} E_{m'}\}_m\) as the \emph{RoPE frequency energy spectrum}.

\vspace{-1ex}
\paragraph{RoPE frequency usage is learned and input-stable}
We first examine whether the RoPE frequency spectrum is already structured at initialization or emerges through training. We compute \(E_m\) for Qwen2.5-1.5B~\cite{qwen2.5} on alpaca~\cite{alpaca} and gsm8k~\cite{cobbe2021gsm8k} and results are shown in~\Cref{fig:input_stable}. At random initialization, the spectrum is nearly uniform across frequencies. After training, it becomes highly concentrated, with most energy assigned to a restricted range of low frequencies. This shows that non-uniform RoPE frequency usage emerges through training. Moreover, the spectra are largely stable across inputs and even across domains, suggesting that frequency usage is encoded in the learned query-key weights rather than selected separately for each prompt.

\vspace{-2ex}
\paragraph{Training data determines what frequency RoPE uses}
\label{sec:motivation_training_data}

The next question is what determines where the learned RoPE spectrum concentrates. Prior work shows that changing the training sequence length alone can shift the frequency band used by a trained model~\citep{oka2026frequency}. This suggests that RoPE frequency usage is sensitive to the training data, but sequence length is a coarse intervention: changing it alters many properties of the data at once. For example, longer sequences can change not only the maximum available distance, but also the distribution of relevant relative distances, the amount of repeated content, and the mixture of local and long-range dependencies.

To isolate the relevant data property, we construct a controlled block-structured sequence task. Each sequence has fixed length \(L=4096\) and consists of latent blocks of length \(B\), where tokens within the same block share a common latent value up to unit Gaussian noise. The prediction task is fixed: the model must retrieve the token at offset \(\Delta=16\). We vary only the block length \(B\in \{32,64,128,256,512,1024,2048\}\), which controls how far information persists across positions in the sequence. See~\Cref{app:block} for more details about the experimental setup and ~\Cref{fig:block_freq}A for a visualization of the data and the task.

We train a two-layer single-head attention-only network,  with $\theta_m = 10000^{-2(m-1)/d}$ as in~\cite{su2023roformerenhancedtransformerrotary}, on this task and measure the RoPE frequency spectrum in the second layer. As shown in~\Cref{fig:block_freq}C, the log-mean effective frequency,
$\theta_{\text{eff}} = \exp(\sum_{m}\omega_m\log(\theta_m))$, with $\omega_m = E_m/\sum_{m'}E_{m'}$, decreases as \(B\) increases, meaning that increasing \(B\) systematically shifts the learned energy toward lower frequencies. Thus, varying a single parameter that controls the dependency scale of the data systematically moves the learned RoPE spectrum.

These observations suggest that RoPE frequency usage is a learned property of the model, shaped by the training distribution rather than selected independently for each input. The key question is therefore what property of the data makes certain frequencies useful after training. We argue that this property is the data-induced positional dependency profile: the distribution of task-relevant dependencies across relative distances. We now formalize this object and show how its characteristic width determines the useful RoPE frequency scale.

\vspace{-2ex}
\section{Theory: Positional Dependency Profiles and RoPE Frequency Selection}
\label{sec:theory}
\vspace{-1ex}

We now formalize the data-side property that controls RoPE frequency selection. The central object is a data-induced positional dependency profile, represented by a kernel \(K(r)\), which measures how strongly a prediction depends on information at relative distance \(r\). The kernel induces a dependency width \(W(K)\), the largest relative-distance scale relevant to the task. On the other hand, a RoPE frequency \(\theta\) represents relative distance $r$ through the phase \(\theta r\). A larger \(\theta\) gives finer local positional resolution but has a smaller unambiguous field before phase wrapping occurs; a smaller \(\theta\) covers a wider field but distinguishes nearby positions more coarsely.  Our analysis makes this field-resolution tradeoff precise and shows that the useful RoPE frequency scale is determined by the dependency width induced by the data.

\vspace{-2ex}
\subsection{Data-induced positional dependency kernels}
\vspace{-1ex}
Consider a sequence prediction task and a query position \(i\). Information at a previous position \(j\) may be relevant for predicting the output at \(i\). We write $r=i-j$
for the relative distance between the two positions.

\begin{definition}[Positional dependency kernel]
A positional dependency kernel is a nonnegative function or measure \(K(r) \ge 0\) over relative distances \(r \ge 0\). It measures the strength of task-relevant dependence at distance \(r\).
\end{definition}
\vspace{-1ex}

When \(K\) is integrable, we normalize it into a probability distribution over relative distances: $p_K(r) =
   K(r)/\int_0^\infty K(s)\,ds$.
 More generally, we write \(dP_K(r)\) for the normalized measure induced by \(K\), which describes how task-relevant dependencies are distributed across relative distances.

\begin{definition}[Field width of data]
If \(K\) is supported on \([0,W]\), we define its field width as $ W(K):=\sup \operatorname{supp}(K)$.
\vspace{-1ex}
\end{definition}

The compact-support definition gives an exact notion of the largest relevant relative distance and is the setting used in our main theoretical analysis. For long-tailed kernels, such as those arising in natural language, we can further define  the \(\rho\)-field width $ W_\rho(K)
    =
    \inf\left\{
    R:\int_0^R dP_K(r)\ge \rho
    \right\},\rho\in(0,1),$
 which gives an operational field width for dependency kernels that do not have compact support.

\vspace{-1ex}
\subsection{RoPE frequencies as positional lenses}
\vspace{-1ex}

We now formalize how a single RoPE frequency encodes relative position, and how its mechanism induces a tradeoff between its field width and resolution.
\vspace{-2ex}

\paragraph{Single-frequency RoPE}
Consider one two-dimensional RoPE subspace with frequency \(\theta\). Let
\(q_i^{(\theta)}, k_j^{(\theta)} \in \mathbb R^2\) denote the corresponding query and key components before applying RoPE. RoPE rotates these components by their absolute positions:
 $\widetilde q_i^{(\theta)}
    =
    R_{\theta i} q_i^{(\theta)},
    \widetilde k_j^{(\theta)}
    =
    R_{\theta j} k_j^{(\theta)}$, where \(R_\varphi\) is the two-dimensional rotation matrix with angle \(\varphi\). The contribution of this frequency to the attention score is 
    $\left\langle
    \widetilde q_i^{(\theta)},
    \widetilde k_j^{(\theta)}
    \right\rangle
    =
    \left\langle
    R_{\theta i}q_i^{(\theta)},
    R_{\theta j}k_j^{(\theta)}
    \right\rangle
    =
      \left\langle
    q_i^{(\theta)},
    R_{\theta r}k_j^{(\theta)}
    \right\rangle,$
using orthogonality of rotations. Thus, although RoPE applies rotations using absolute positions, the attention score depends on position only through the relative distance $r=i-j$.
\vspace{-2ex}

\paragraph{Relative positional information by RoPE} 
The single-frequency $\theta$ contribution to the raw attention score is $ s_{ij}^{(\theta)}
    =
    a_{ij,\theta}\cos(\theta r)
    +
    b_{ij,\theta}\sin(\theta r), r=i-j,$
where \(a_{ij,\theta} = q_{i1}^{\theta}k_{j1}^{\theta} + q_{i2}^{\theta}k_{j2}^{\theta}\)  and \(b_{ij,\theta} = q_{i1}^{\theta}k_{j2}^{\theta} - q_{i2}^{\theta}k_{j1}^{\theta}\) depend on the query and key components corresponding to $\theta$. Essentially, the frequency represents relative distance through the two-dimensional feature~\cite{Tancik2020FourierFL} $\gamma_\theta(r) = (\cos(\theta r), \sin(\theta r))$, and the attention score is a learned, content-dependent projection of this feature.

\vspace{-2ex}
\paragraph{Positional contrast} To quantify the positional separation provided by frequency \(\theta\), independent of the particular projection coefficients, we compare the feature at distance \(r\) with the reference feature at distance \(0\). We define the  \emph{positional contrast} score to be $D_\theta(r)
    =
    1-
    \left\langle
    \gamma_\theta(r),
    \gamma_\theta(0)
    \right\rangle = 1-\cos(\theta r)$,
 which measures how much frequency $\theta$ separates relative distance $r$ from the reference distance $0$. Larger values of \(D_\theta(r)\) means stronger positional contrast.

\vspace{-2ex}
\paragraph{Resolution} This contrast induces a natural notion of \emph{resolution}. For a threshold \(\tau\in(0,2)\), define the resolution length $\Delta_\tau(\theta)
    =
    \inf\left\{
    r>0:
    1-\cos(\theta r)\ge \tau
    \right\} =
    \arccos(1-\tau)/\theta$. This quantity measures the smallest distance at which positions become distinguishable at level \(\tau\). Larger frequencies yield smaller resolution length and therefore finer positional resolution.

\vspace{-2ex}
\paragraph{Field} While the relative-position feature \(\gamma_\theta(r)\) is periodic with period \(2\pi/\theta\), the position contrast \(D_\theta(r)\) is monotone only over a half period.  We therefore define the \emph{field of frequency} \(\theta\) as $F(\theta)=\pi/\theta$.
 Within \([0,F(\theta)]\), larger relative distances correspond to larger positional contrast. Beyond this range, the contrast begins to decrease, so different distances can produce the same contrast value and the frequency no longer provides an unambiguous ordering of distances from the reference.

\vspace{-1ex}
These definitions give the field-resolution tradeoff: larger \(\theta\) improves local resolution but shrinks the unambiguous field; smaller \(\theta\) covers a wider field but distinguish nearby positions more coarsely.

\vspace{-2ex}
\paragraph{Admissible frequencies}
A dependency kernel \(K(r)\) induces a dependency width \(W\). For a frequency to be useful over the full task-relevant region, its field must cover this width.

\begin{definition}[Field-admissible frequency]
A frequency \(\theta\) is admissible for a dependency kernel of width \(W\) if $F(\theta)\ge W$, or equivalently, $\theta W \le \pi$.
\vspace{-1ex}
\end{definition}

We denote the admissible set by $\mathcal A_W
    =
    \left\{
    \theta>0:\theta W\le \pi
    \right\}$. Admissibility formalizes a coverage requirement: the frequency must remain unambiguous over the relevant dependency width. Among admissible frequencies, we then ask which one provides the strongest positional contrast.

\vspace{-2ex}
\subsection{Field-constrained optimal frequency}
\vspace{-1ex}
Let \(K\) be a positional dependency kernel supported on \([0,W]\), and let \(P_K\) be its normalized measure. Define the average positional contrast of a frequency \(\theta\) against \(K\) by
\vspace{-1ex}
\[
    U(\theta;K)
    =
    \int_0^W
    \bigl(1-\cos(\theta r)\bigr)
    \,dP_K(r).
\]
This utility measures how much positional contrast the frequency \(\theta\) provides over task-relevant distances specified by \(K\). Larger values of \(U(\theta;K)\) indicate that the frequency better separates positions that matter for the task.

\begin{theorem}[Field-constrained optimal frequency]
\label{thm:field_constrained_optimal_frequency}
Let \(W>0\). Let \(P_K\) be a probability measure on \([0,W]\) satisfying $  P_K((0,W])>0$.
Then \(U(\theta;K)\) is strictly increasing on the admissible set $\mathcal A_W$. Consequently, the unique maximizer is $\theta^\star=  \underset{\theta\in\mathcal A_W}{\argmax}\,U(\theta, K) = \pi/W$.
\end{theorem}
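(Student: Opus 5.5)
The plan is a pointwise monotonicity argument followed by integration against \(P_K\). Fix \(0<\theta_1<\theta_2\le \pi/W\), both in \(\mathcal A_W=(0,\pi/W]\). For every \(r\in[0,W]\) we have \(0\le \theta_1 r\le \theta_2 r\le \theta_2 W\le \pi\). On \([0,\pi]\) the map \(x\mapsto 1-\cos x\) is nondecreasing, and it is strictly increasing there because its derivative \(\sin x\) is positive on \((0,\pi)\). Hence for \(r\in(0,W]\) we get \(\theta_1 r<\theta_2 r\), both lying in \([0,\pi]\), and so
\[
1-\cos(\theta_1 r)<1-\cos(\theta_2 r).
\]
At \(r=0\) the two sides are equal (both zero).

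Next I integrate this inequality against \(dP_K\) over \([0,W]\) to obtain \(U(\theta_1;K)\le U(\theta_2;K)\). To make the inequality strict, I write
\[
U(\theta_2;K)-U(\theta_1;K)=\int_{(0,W]} g(r)\,dP_K(r),\qquad g(r)=\cos(\theta_1 r)-\cos(\theta_2 r).
\]
The function \(g\) is continuous and strictly positive on \((0,W]\), and it vanishes at \(r=0\). This is the only step requiring care, since a positive function integrated against a measure gives a positive result only when the measure charges the positivity set.

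For that step I use the hypothesis \(P_K((0,W])>0\). Write \((0,W]=\bigcup_n [W/n,W]\). By continuity of measure, some \(n\) has \(P_K([W/n,W])>0\). On the compact interval \([W/n,W]\) the continuous function \(g\) attains a minimum \(c>0\). Therefore
\[
\int_{(0,W]} g\,dP_K\ \ge\ c\,P_K([W/n,W])\ >\ 0.
\]
This establishes strict monotonicity of \(U(\cdot;K)\) on \(\mathcal A_W\).

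Finally, \(\mathcal A_W=(0,\pi/W]\) is an interval containing its right endpoint. A strictly increasing function on it therefore attains its maximum exactly at \(\theta^\star=\pi/W\), and this maximizer is unique: any \(\theta<\pi/W\) gives a strictly smaller value by the monotonicity just shown. The main subtlety is the strictness argument in the previous paragraph. Everything else follows from the monotonicity of \(1-\cos\) on \([0,\pi]\), which is exactly what the admissibility constraint \(\theta W\le\pi\) guarantees.
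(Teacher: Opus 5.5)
Your proof is correct, and it reaches the result by a somewhat different, more elementary route than the paper.

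The paper differentiates under the integral sign, justified by dominated convergence using \(|r\sin(\theta r)|\le W\). This gives \(\frac{d}{d\theta}U(\theta;K)=\int_0^W r\sin(\theta r)\,dP_K(r)\). It then shows this derivative is strictly positive for \(\theta\in(0,\pi/W)\), because the integrand is positive on \((0,W]\) and \(P_K((0,W])>0\). Finally it invokes continuity of \(U\) on \([0,\pi/W]\) to reach the endpoint \(\theta^\star=\pi/W\).

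You instead compare \(U(\theta_1;K)\) and \(U(\theta_2;K)\) directly, using the pointwise strict monotonicity of \(1-\cos\) on the closed interval \([0,\pi]\). This has two advantages. It needs no interchange of derivative and integral. It also covers \(\theta_2=\pi/W\) directly, so no separate continuity step is needed.

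You also make explicit a measure-theoretic step the paper asserts without comment: a function that is strictly positive on a set of positive \(P_K\)-measure has positive integral. Your continuity-of-measure plus compactness argument handles this cleanly. You also correctly observe that any atom at \(r=0\) contributes nothing.

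Both arguments rest on the same underlying fact: admissibility \(\theta W\le\pi\) keeps every phase \(\theta r\) in \([0,\pi]\), where \(\sin\) is nonnegative. The paper's route does give an explicit formula for \(U'\), which quantifies how fast utility grows toward the admissible boundary. For the statement as posed, however, your argument is self-contained and slightly cleaner.
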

\vspace{-1ex}

The theorem gives a precise frequency-matching principle. Among all frequencies whose field covers the data-induced dependency region, the most useful one is the highest admissible frequency. Therefore, when the dependency has width \(W\), the preferred frequency scales as: $\theta^\star \asymp 1/W$. Broader dependency widths favor lower frequencies, while narrower dependency widths favor higher frequencies. This is the theoretical basis for the frequency-matching phenomena observed in~\Cref{sec:motivation_training_data} (see \Cref{fig:block_freq}).

\vspace{-2ex}
\paragraph{Natural language as a mixture of positional dependencies}
\Cref{thm:field_constrained_optimal_frequency} analyzes the single-profile case, where the dependency kernel has one characteristic width \(W\). Natural language contains a mixture of positional dependency scales, from local collocations to phrase-, sentence-, and discourse-level structure. Thus, the theorem should not be read as predicting a single optimal frequency for language. Instead, it gives the scale-wise principle: wider dependencies favor lower admissible frequencies, while narrower dependencies favor higher ones. Since RoPE provides a fixed discrete grid of frequencies, training must allocate frequency energy across the available grid according to the dependency scales present in the data. This provides a principled lens for interpreting the mid-low frequency usage band observed in real language models~\cite{Barbero2024RoundAR,Jin2025MassiveVI,oka2026frequency}.





\vspace{-2ex}
\section{Position interpolation as field dilation}
\label{sec:pi_theory}
\vspace{-2ex}
RoPE supports inference-time extrapolation to longer sequences via position interpolation (PI)~\cite{chen2023extending}. For a model trained on sequences of length \(L\), PI applies the model to length \(\alpha L\), \(\alpha>1\), by rescaling each RoPE frequency as \(\theta \mapsto \theta/\alpha\). This simple operation underlies many RoPE-based length generalization methods~\cite{chen2023extending, bloc97_2023_ntk_by_parts, bloc97_2023_ntk_rope,Peng2023YaRNEC, emozilla_2023_dynamic_rope}. What does this operation do to the positional information learned during training so that it achieves "length generalization"? We now interpret PI through the field-resolution framework developed in~\Cref{sec:theory}: PI dilates the field of each frequency while reducing its resolution.

\vspace{-2ex}
\subsection{Resolution-field tradeoff under position interpolation}
\vspace{-1ex}

Recall that a RoPE frequency \(\theta\) represents relative distance through
 $\gamma_\theta(r) = (\cos(\theta r), \sin(\theta r))$. Under PI, the same distance \(r\) is represented using frequency \(\theta/\alpha\): $ \gamma_{\theta/\alpha}(r)
    =
    (\cos((\theta/\alpha)r),
    \sin((\theta/\alpha)r)) 
    =
    \gamma_\theta\left(r/\alpha\right)$. Thus a distance \(r\) in the longer sequence is represented exactly as distance \(r/\alpha\) would have been represented at the original frequency. This is why PI can be viewed as interpolation rather than direct extrapolation in absolute position.

From the perspective of the field–resolution tradeoff, this transformation has two opposing effects. First, PI expands the field of each frequency. Since the field width of a frequency $\theta$ is $F(\theta)= \pi/\theta$, after rescaling, we obtain 
    $F\left(\theta/\alpha\right)
    =
    \pi/(\theta/\alpha)
    =
    \alpha F(\theta)$, meaning the effective field increases by a factor of $\alpha$. At the same time, this expansion comes at the cost of reduced positional resolution. Recall that the resolution length is $\Delta_\tau(\theta) = \arccos(1-\tau)/\theta$. Under PI, this becomes $\Delta_\tau\left(\theta/\alpha\right)
    =
    \arccos(1-\tau)/(\theta/\alpha)
    =
    \alpha\Delta_\tau(\theta)$. Thus, the resolution length also increases by a factor of $\alpha$, meaning that positional distinctions become coarser at the same rate as the field expands.

\vspace{-2ex}
\subsection{Self-similar tasks and extrapolation by interpolation}
\vspace{-1ex}

The field-resolution tradeoff shows what PI does mechanically: it expands the field size of each RoPE frequency while reducing its resolution. 
The remaining question is when this tradeoff supports length generalization. Since useful RoPE frequencies are matched to the scale of the data-induced dependency profile, rescaling frequencies at inference time should help when the dependency structure in the longer context is itself a rescaled version of the structure seen during training. We formalize this condition as self-similarity.

\begin{definition}[Self-similarity]
    Let \(P_W\) be a normalized positional dependency measure with width \(W\). We say the task family is self-similar if for every \(\alpha>1\), $P_{\alpha W}
    =
    (S_\alpha)_\#P_W, 
    S_\alpha(r)=\alpha r,$
where \((S_\alpha)_\#P_W\) is the pushforward of \(P_W\) under the dilation map \(S_\alpha\). 
\end{definition}
\vspace{-1ex}

Equivalently, for any measurable function \(f\), $\int f(r)\,dP_{\alpha W}(r)
    =
    \int f(\alpha s)\,dP_W(s)$. In words, self-similarity means that the long-context dependency structure a scaled (dilated) version of the short-context one along the position axis.
We now show that, under this self-similarity condition, position interpolation preserves the positional utility of each frequency component. We write \(U(\theta,W)\) for the utility \(U(\theta;P_W)\) associated with the dependency kernel at width \(W\).

\begin{theorem}
If $P_{\alpha W}=(S_\alpha)_\#P_W$,
then $U\left(\theta/\alpha,\alpha W\right)
    =
    U(\theta, W).$
\label{thm:PI}
\end{theorem}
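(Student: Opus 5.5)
The plan is a direct change of variables using the pushforward characterization of self-similarity stated just after its definition. First I will write the utility at the dilated width explicitly as
\[
U(\theta/\alpha,\alpha W)=\int_0^{\alpha W}\bigl(1-\cos((\theta/\alpha) r)\bigr)\,dP_{\alpha W}(r).
\]
The only point needing care is the domain of integration. Since $P_W$ is supported on $[0,W]$, its pushforward $(S_\alpha)_\#P_W$ is supported on $S_\alpha([0,W])=[0,\alpha W]$. Hence I can equivalently integrate over $[0,\infty)$ (or all of $\mathbb{R}$) against $P_{\alpha W}$ without changing the value. The same holds for $U(\theta,W)$ over $[0,W]$.

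Next I will apply the identity $\int f(r)\,dP_{\alpha W}(r)=\int f(\alpha s)\,dP_W(s)$ to the function $f(r)=1-\cos((\theta/\alpha)r)$. This $f$ is bounded and continuous, hence measurable and integrable against any probability measure, so the identity applies. The composition gives $f(\alpha s)=1-\cos\bigl((\theta/\alpha)\alpha s\bigr)=1-\cos(\theta s)$. Integrating against $P_W$ over its support $[0,W]$ then yields exactly $U(\theta,W)$, which proves the theorem.

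I do not expect a real obstacle. The only step worth stating explicitly is justifying the change-of-variables identity, which is the definition of the pushforward: for indicator functions $f=\mathbf 1_A$ it reads $P_{\alpha W}(A)=P_W(S_\alpha^{-1}A)$, and it extends to bounded measurable $f$ by linearity and monotone convergence. I will also remark, as an immediate corollary in the spirit of \Cref{thm:field_constrained_optimal_frequency}, what happens to the admissible sets. The map $\theta\mapsto\theta/\alpha$ sends $\mathcal A_W$ bijectively onto $\mathcal A_{\alpha W}$, since $\theta W\le\pi$ if and only if $(\theta/\alpha)(\alpha W)\le\pi$. Consequently the maximizer $\pi/W$ is mapped to $\pi/(\alpha W)$, consistent with PI mapping the optimal training-scale frequency to the optimal long-context one.
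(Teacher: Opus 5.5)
Your proposal is correct and matches the paper's proof: you write $U(\theta/\alpha,\alpha W)$ as an integral against $P_{\alpha W}$, apply the pushforward identity with $f(r)=1-\cos((\theta/\alpha)r)$, and use $f(\alpha s)=1-\cos(\theta s)$ to recover $U(\theta,W)$. Your extra remarks are sound additions rather than a different route: the support of $(S_\alpha)_\#P_W$ lying in $[0,\alpha W]$, and the bijection $\mathcal A_W\to\mathcal A_{\alpha W}$, which is essentially the content of \Cref{cor:PI}.
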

\vspace{-1ex}

Thus, when the dependencies dilates with context length, PI exactly preserves positional contrast utility. The frequency component useful at width \(W\) remains equally useful at width \(\alpha W\) after rescaling \(\theta\) to \(\theta/\alpha\). PI therefore does not create a new positional mechanism for longer contexts; it reuses the training-scale mechanism under a rescaled distance coordinate.

\begin{corollary}[Optimal frequency scales under self-similarity]
Assume the conditions of~\Cref{thm:PI}.
If $\theta_W^\star
    =
    \argmax_{\theta\in\mathcal{A}_W} U(\theta, W)$, then
 $\theta_W^\star/\alpha
   = \argmax_{\theta'\in\mathcal{A}_{\alpha W}} U(\theta',\alpha W)$.
\label{cor:PI}
\end{corollary}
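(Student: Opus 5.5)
The plan is a change of variables $\theta'=\theta/\alpha$, combined with \Cref{thm:PI}. First I note that the admissible sets correspond exactly under this map: $\theta\in\mathcal A_W$ iff $\theta W\le\pi$ iff $(\theta/\alpha)(\alpha W)\le\pi$ iff $\theta/\alpha\in\mathcal A_{\alpha W}$. So $\theta\mapsto\theta/\alpha$ is a bijection from $\mathcal A_W$ onto $\mathcal A_{\alpha W}$, with inverse $\theta'\mapsto\alpha\theta'$.

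Next I transport the objective. For any $\theta'\in\mathcal A_{\alpha W}$, write $\theta'=\theta/\alpha$ with $\theta=\alpha\theta'\in\mathcal A_W$. \Cref{thm:PI} then gives $U(\theta',\alpha W)=U(\theta,W)$. It follows that
\[
\sup_{\theta'\in\mathcal A_{\alpha W}}U(\theta',\alpha W)=\sup_{\theta\in\mathcal A_W}U(\theta,W).
\]
Moreover, $\theta'$ attains the left supremum exactly when $\alpha\theta'$ attains the right one. Since $\theta_W^\star$ is the (unique) maximizer on $\mathcal A_W$, the set of maximizers on $\mathcal A_{\alpha W}$ is exactly $\{\theta_W^\star/\alpha\}$. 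This gives the claimed identity of argmaxes.

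The only delicate point is making sense of $\argmax$ as a single element, which requires uniqueness. Uniqueness can come either from the hypothesis implicit in the statement or from \Cref{thm:field_constrained_optimal_frequency}. For the latter route, I would check that $P_{\alpha W}((0,\alpha W])=P_W((0,W])>0$ under the pushforward, since $S_\alpha$ maps $(0,W]$ onto $(0,\alpha W]$. \Cref{thm:field_constrained_optimal_frequency} then gives $\theta_W^\star=\pi/W$ and the maximizer on $\mathcal A_{\alpha W}$ as $\pi/(\alpha W)=\theta_W^\star/\alpha$. This yields an explicit cross-check of the conclusion.

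Finally, I would remark that \Cref{thm:PI} is used in the form where $W$ and $\alpha W$ are fixed and $\theta$ varies over all positive reals. This is legitimate because its proof is a pointwise pushforward identity, $\int(1-\cos((\theta/\alpha)\alpha s))\,dP_W(s)=U(\theta,W)$, which holds for every $\theta$.
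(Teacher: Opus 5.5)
Your proposal is correct and is essentially the argument the paper intends: the paper gives no separate proof of the corollary and presents it as an immediate consequence of \Cref{thm:PI}, and your bijection $\theta\mapsto\theta/\alpha$ from $\mathcal A_W$ onto $\mathcal A_{\alpha W}$, combined with the pointwise invariance $U(\theta/\alpha,\alpha W)=U(\theta,W)$, fills in exactly that step. Your cross-check via \Cref{thm:field_constrained_optimal_frequency} is also valid, including the verification that $P_{\alpha W}((0,\alpha W])=P_W((0,W])>0$, which makes the maximizer $\pi/(\alpha W)$ unique and so makes the single-valued $\argmax$ well posed.
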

\vspace{-1ex}

The corollary shows that, under self-similarity, the optimal frequency at the longer scale is exactly the PI-scaled version of the optimal frequency at the training scale. This gives a precise condition under which frequency scaling should support length generalization.

We note that when this self-similarity condition fails, PI need not help. If the longer-context dependency structure is not a dilation of the training-time one, then frequency rescaling may move attention away from the relevant relative distances or blur the resolution required by the task. In~\Cref{sec:PI}, we present examples illustrating both regimes: PI succeeds when dependencies scale with context length, and fails when they are not. We then show that the approximate self-similarity of natural-language dependencies~\cite{Alabdulmohsin2024FractalPM} provides a data-side foundation for why PI can support length generalization in language models.

\vspace{-2ex}
\section{Experiments}
\vspace{-2ex}

\subsection{Dependency width predicts RoPE frequency usage}
\label{sec:freq_exp}
\vspace{-1ex}

We first test the scale-matching prediction of our theory: broader dependency width should induce lower-frequency RoPE usage. We begin with the block-structured data from~\Cref{sec:motivation_training_data}. In this setting, the block length \(B\) directly controls the range over which information persists across positions. We measure the autocorrelation of the generated sequences and find that the dependency width satisfying $W=B$, as shown in~\Cref{fig:block_freq}B. We then fit the measured effective frequency to the predicted inverse law $\theta_{\mathrm{eff}} = c/W$, 
and obtain \(c=3.02\), close to the theoretical constant \(\pi\) in~\Cref{thm:field_constrained_optimal_frequency}. This provides a direct controlled validation of the field-frequency matching law.

We next ask whether the same principle appears in text-like pretraining data. Since natural language does not allow direct control of dependency width, we use the iGSM dataset~\cite{YXLA2024-gsm1}, which generates grade-school math problems with controllable reasoning length. We vary the number of operations, \(\text{ops}\in\{2,5,10\}\). Larger \(\text{ops}\) produces longer questions and broader dependencies, which we verify by estimating the mutual-information profile as shown in~\Cref{fig:iGSM_field_frequency}A. See \Cref{app:iGSM} for more details.

We then train separate 12-layer GPT models on the three iGSM datasets using the nanochat framework~\cite{nanochat} and measure their learned RoPE frequency usage. For each layer \(l\) and head \(h\), we summarize the spectrum by the log-effective frequency $\log \theta_{\mathrm{eff}}
    =
    \sum_m \omega_m \log \theta_m,$
where \(E_m\) is the RoPE energy assigned to the \(m\)-th frequency,  $\omega_m
    =
    E_m/\sum_n E_n$, and $\theta_m = 100000^{-2(m-1)/d}$. A smaller \(\log \theta_{\mathrm{eff}}\) indicates greater use of lower frequencies. As shown in~\Cref{fig:iGSM_field_frequency}B, increasing the dependency width by increasing number of ops shifts RoPE usage toward lower frequencies across layers, consistent with the scale-matching prediction given by our theory.

\begin{figure}[t]
\centering
\includegraphics[width=\linewidth]{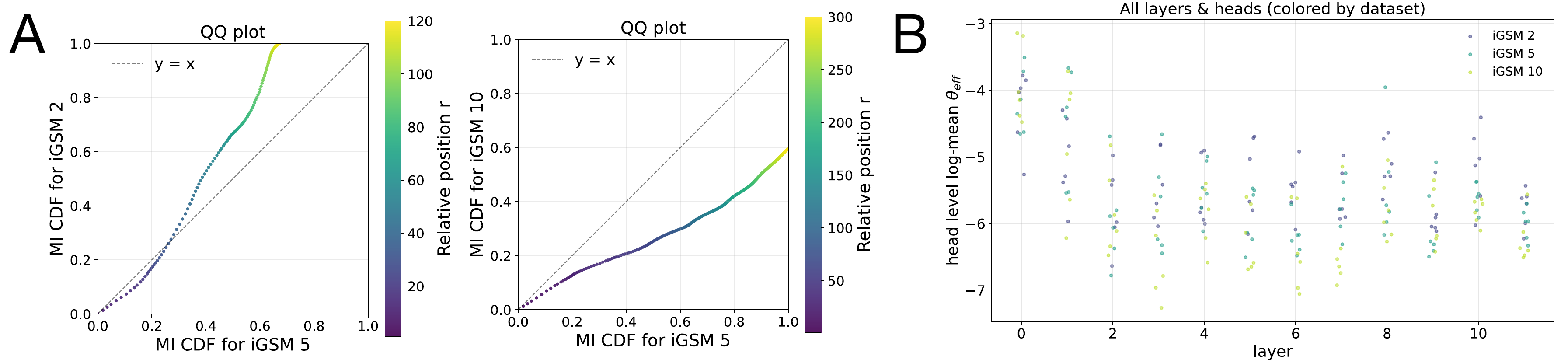}
\caption{\small
\textbf{Data dependency width predicts RoPE frequency usage.}
(A) Mutual-information CDFs over relative positions show that increasing the number of operations broadens the dependency width: iGSM-2 concentrates information at shorter distances, while iGSM-5 and iGSM-10 shifts dependency mass to longer distances.
(B) Head-level log-mean effective frequencies for 12-layer transformer models trained separately on iGSM-2, iGSM-5, and iGSM-10. Broader dependencies induce lower effective RoPE frequency usage, consistent with our frequency scale-matching theory (\Cref{sec:theory}).
}
\label{fig:iGSM_field_frequency}
\vspace{-3ex}
\end{figure}

\vspace{-2ex}
\subsection{When Does PI Enable Inference-Time Length 
Generalization?}
\label{sec:PI}
\vspace{-1ex}
\Cref{sec:pi_theory} predicts that PI should help when test-time dependencies are approximately stretched versions of training-time dependencies. We now test this prediction and examine what happens when this self-similar structure is absent. We consider three cases: a synthetic retrieval task that isolates the success and failure modes, natural language with approximately self-similar dependencies, and arithmetic tasks with non-self-similar dependencies.

\begin{figure}
    \centering
    \includegraphics[width=\linewidth]{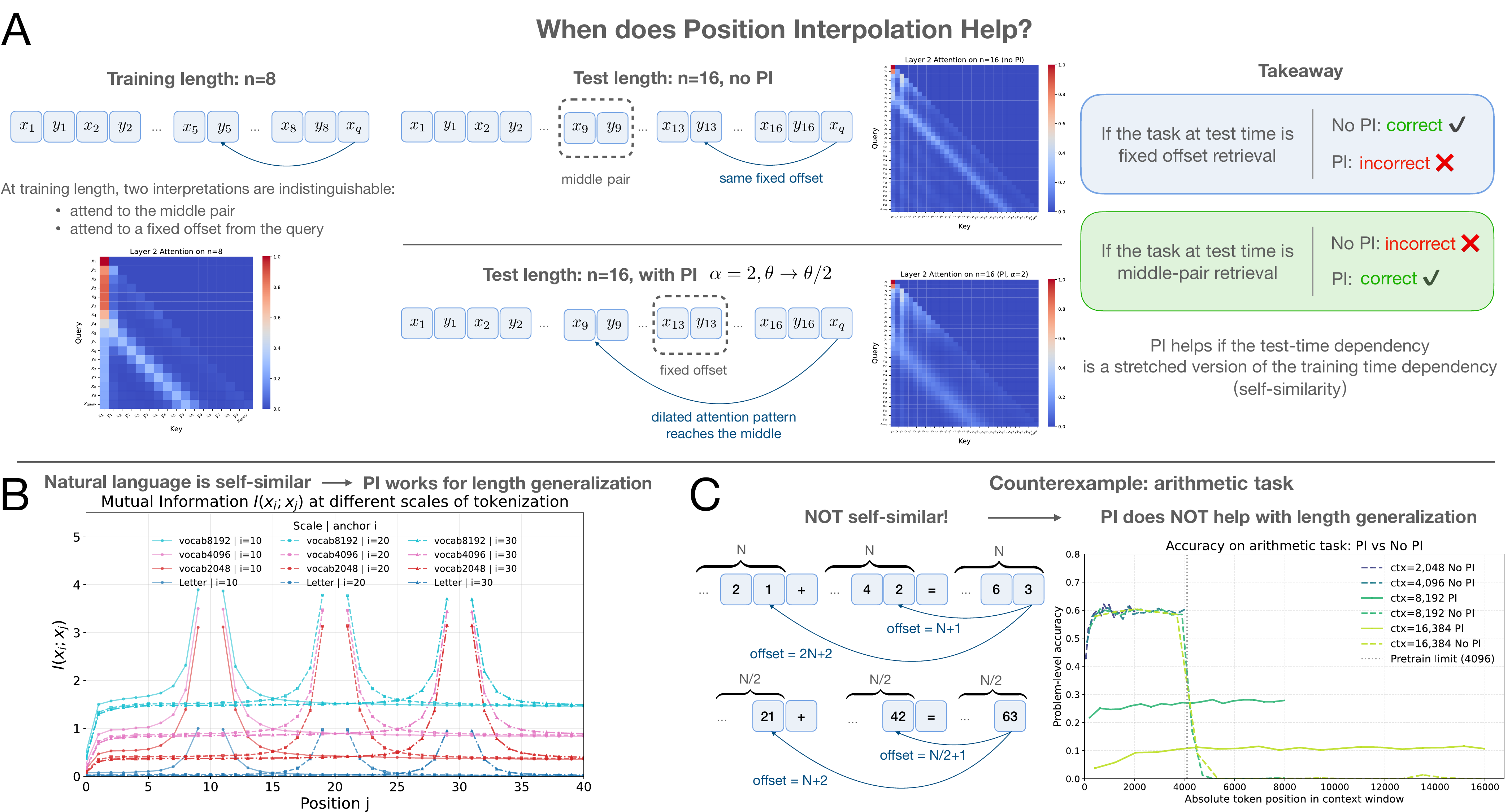}
    \caption{\small
\textbf{PI is effective when dependencies scale with context length, and can fail when they do not.}
(A) At training length \(n=8\), attending to the middle pair and attending to a fixed offset are indistinguishable. At test length \(n=16\), No PI preserves the fixed-offset pattern, whereas PI with \(\alpha=2\) dilates the attention pattern toward the middle. Hence PI helps for middle-pair retrieval but hurts fixed-offset retrieval.
(B) Natural language shows approximately self-similar dependency structure across tokenization scales, as reflected by similar mutual-information profiles, providing a basis for PI-based length generalization.
(C) Arithmetic tasks are not self-similar: they require precise local alignment rather than stretched dependencies. PI therefore trades resolution for field size and fails to provide true length generalization.
}
    \label{fig:pi_meta}
    \vspace{-3.5ex}
\end{figure}

\vspace{-2ex}
\paragraph{PI is effective when dependencies scale with context length} We first consider a synthetic retrieval task that admits two indistinguishable strategies at the training length. Following~\citet{Reddy2023TheMB, Wu2025OnTE}, we consider a sequence of the form \( x_1, y_1, \dots, x_n, y_n, x_{\text{query}} \), and the model is trained to predict the label $y_{\text{query}}$
of a target $x_{\text{query}}$ using the cross-entropy loss. During training, we set \(n=8\) and $x_{\text{query}}$ is constructed to match the label of $x_{n/2+1}$ in the sequence. As in the block data experiment, we train a two-layer attention-only single-head network and examine the attention patterns in the second layer. See~\Cref{app:key-value} for more experimental details. The visualization of the task and the resulting attention patterns are shown in~\Cref{fig:pi_meta}A. 

At the training length ($n=8$), there are two possible ways to interpret the learned behavior of the model: (1) the model may learn to attend to the middle pair of the sequence, namely $(x_{n/2+1}, y_{n/2+1})$, or (2) the model may instead learn attend to a fixed offset from the query. These two interpretations are indistinguishable at the training length, since the middle position coincides with a fixed offset from the query. We then evaluate the trained model on sequences of doubled length ($n = 16$). Without PI, the model attends to tokens at the same fixed offset relative to the query, indicating that it has learned an absolute-offset strategy (2). However, with PI ($\alpha = 2$), the attention now attends toward the middle region of the longer sequence. 

This illustrates the central prediction of our theory: PI does not generically improve length generalization. it helps when the relevant dependency at test time is a scaled version of the dependency learned during training. Depending on the notion of "correctness" at longer sequence lengths, If the test task still requires retrieving the token at the same fixed offset from the query, then PI hurts, since it moves attention away from the fixed-offset target. If instead the test task requires retrieving the corresponding relative position, such as the middle element of the longer sequence, then PI helps.

\vspace{-2ex}
\paragraph{Natural language is approximately self-similar}

Our theory predicts that PI should be effective when the dependency structure of the data is approximately self-similar. Since PI has been shown to extend context length with limited perplexity degradation on natural language~\cite{chen2023extending}, a natural question is whether this success reflects an approximate self-similarity property of language rather than a generic benefit of frequency scaling. To test this, we measure the average mutual information \(I(x_i;x_j)\) between tokens at different relative distances on the Nemotron-ClimbMix pretraining dataset~\cite{diao2025climb}. 

To approximate a positional dilation of natural language, we vary the tokenization granularity and examine whether dependency profiles remain stable across scales. We train Byte-Pair Encoding (BPE) tokenizers with different vocabulary sizes: larger vocabularies produce coarser tokenizations, with more information per token and fewer tokens per text span. We then compute mutual information on the resulting tokenized sequences. For more details see \Cref{app:MI}. 

As shown in~\Cref{fig:pi_meta}B, the mutual-information profiles remain similar across tokenization scales. This suggests that natural-language dependency structure is approximately self-similar, consistent with prior work~\cite{Alabdulmohsin2024FractalPM}. This provides a data-side explanation for why PI can support length generalization in language models: when longer-context dependencies are stretched versions of shorter-context dependencies, scaling down RoPE frequencies lets the model reuse the positional patterns learned during training.

\vspace{-2ex}
\paragraph{Natural language has long-tailed dependencies}
Another observation from~\Cref{fig:pi_meta}B is that, although the mutual-information profiles decay rapidly at short distances, they remain nonzero over long ranges. This is also consistent with prior evidence that natural language exhibits long-range dependencies~\cite{Alabdulmohsin2024FractalPM, Li1989MutualIF}. In our framework, this heavy tail implies that increasing the training sequence length \(L\) also increases the effective dependency width \(W\), because additional long-range dependency mass becomes visible at larger distances. This perspective also helps explain two empirical observations: training on longer sequences shifts RoPE usage toward lower frequencies by exposing broader dependencies~\cite{oka2026frequency}, and lower frequencies are essential for longer-context tasks because they provide larger unambiguous fields~\cite{chen2023extending, Peng2023YaRNEC,Men2024BaseOR}.

\vspace{-2ex}
\paragraph{PI provides substantially weaker length generalization on non-self-similar arithmetic tasks}
We next test PI in a setting where the dependency structure is not self-similar and the task requires fine-grained positional resolution. We evaluate PI with Llama-2-7B~\cite{touvron2023llama}, which has a context length of \(4096\), on an arithmetic dataset in which each sequence is formed by concatenating randomly sampled 1--5 digit arithmetic problems. We measure exact answer-token accuracy at different context lengths using teacher-forced decoding. Detailed setups are provided in \Cref{app:PI_arithmetic}. 

\vspace{-1ex}
The results are shown in~\Cref{fig:pi_meta}C. Without PI, the model performs well within the training context length but fails beyond it. With PI, nontrivial performance extends to longer contexts, but accuracy drops across the range. This matches our theory: PI expands the effective field by sacrificing positional resolution, which is useful only when the relevant dependencies scale with context length. Arithmetic lacks this self-similar structure. 
Longer contexts add more problems, but do not dilate the dependency structure of each problem, and each problem still requires precise positional alignment. PI therefore broadens the field without preserving the fine-scale resolution needed for accurate arithmetic.

\vspace{-2ex}
\paragraph{Perplexity can overstate long-context ability}
In the arithmetic task, PI can yield lower perplexity than the no-PI baseline even when exact answer accuracy is low (see~\Cref{app:exp_add_ppl}). Inspecting answer tokens shows why: at a 16k context length, the answer-token perplexity is approximately \(16\), worse than the perplexity \(10\) of a uniform guess over digits \(0\)--\(9\). Furthermore, unlike in natural-language tasks where PI preserves perplexity with minimal degradation beyond the training context length~\cite{chen2023extending}, PI in this arithmetic setting still produces substantially higher perplexity on long contexts than the original model achieves within its training context. Thus, lower overall perplexity does not necessarily indicate long-context ability. This supports the view that perplexity can be an unreliable proxy for long-context generalization, especially when the task requires precise positional resolution~\cite{fang2024wrong}.

\vspace{-2ex}
\section{Discussion}
\vspace{-2ex}

In this paper, we study RoPE frequency usage in transformers through a data-centered lens, developing a theoretical framework that connects learned positional spectra to the dependency structure of the training data. Our analysis reveals two key findings. First, RoPE frequencies act as positional lenses, trading off local resolution against the field over which they remain unambiguous; consequently, broader dependencies favor lower frequencies, while local dependency structures favor higher frequencies. Second, the same field-resolution tradeoff explains when PI supports length generalization: scaling frequencies down expands the effective field but blurs local resolution, helping when test-time dependencies are approximate dilations of those seen during training. More broadly, our results suggest that positional encodings should be studied together with the data distributions on which they are trained. The same architecture can learn different positional resolutions depending on which dependencies the data makes useful. This perspective points toward data- and model-guided context extension: instead of applying fixed frequency-scaling rules, future methods could use measured dependency widths and learned frequency spectra to choose frequency grids, scaling schedules, and evaluations that better match the target domain.


\bibliography{references}
\bibliographystyle{plainnat}


\appendix

\section{Related Work}
\paragraph{RoPE variants and analyses of RoPE}
RoPE encodes relative position by rotating query and key features with a fixed grid of sinusoidal frequencies~\citep{su2023roformerenhancedtransformerrotary}, connecting it naturally to Fourier feature representations~\citep{Tancik2020FourierFL}. This frequency view has motivated several methods for long-context extension. Position Interpolation rescales RoPE frequencies to extend pretrained models to longer contexts~\citep{chen2023extending}, while NTK-aware scaling, dynamic scaling, and YaRN modify the scaling rule across frequencies to improve extrapolation~\citep{bloc97_2023_ntk_rope,emozilla_2023_dynamic_rope,Peng2023YaRNEC}. Recent work further analyzes RoPE from a signal-processing perspective: Fourier Position Embedding interprets RoPE-based attention through a non-uniform discrete Fourier transform and proposes a Fourier-based alternative for length generalization~\citep{Hua2025FourierPE}, while \citet{Men2024BaseOR} study how the RoPE base constrains attainable context length.

Several works study how different RoPE dimensions and frequencies are used in trained models. \citet{Barbero2024RoundAR} identify distinct frequency bands and propose p-RoPE, which removes the lowest-frequency rotations. \citet{oka2026frequency} show that frequency usage depends on the RoPE base and training context length. \citet{Jin2025MassiveVI} find that large-magnitude activations concentrate in specific Q/K dimensions, play an important role in contextual knowledge understanding, and are closely tied to RoPE. These works reveal important structure in RoPE frequency behavior and Q/K representations, but they primarily focus on architectural choices or model-internal properties. Our work instead asks what makes a frequency useful in the first place. This question is natural for attention: since attention architectures are designed to learn relevant interactions from data, understanding their positional behavior requires understanding what dependency structure the data makes useful. We show that RoPE frequency usage is shaped by the relative-distance dependency structure of the training distribution. This data-centered view explains why learned spectra change across training distributions, why broader dependency widths favor lower frequencies, and why position interpolation supports length generalization when longer-context dependencies resemble scaled versions of those seen during training.

\paragraph{Learning positional information in attention}
A growing body of work shows that attention models develop systematic positional behavior. In retrieval and ranking settings, model performance can depend strongly on where relevant information appears in the context, leading to phenomena such as lost-in-the-middle behavior~\citep{liu2024lost,Hou2023LargeLM,Zheng2023JudgingLW}. Large language models also exhibit serial position effects across tasks~\citep{Guo2024SerialPE}, and in-context learning can vary substantially with the order of demonstrations~\citep{Lu2021FantasticallyOP,Min2022RethinkingTR,zhao2021calibrateuseimprovingfewshot,Fan2025InvICL}. Other work identifies architectural sources of positional bias: \citet{Xiao2023EfficientSL} and \citet{Gu2024WhenAS} study attention sinks, where models attend disproportionately to early tokens; \citet{Wang2024EliminatingPB} observe that causal masking and RoPE can introduce position dependence; and \citet{Wu2025OnTE} show the multi-layer effects of masks and positional encodings~\citep{Wu2025OnTE}.

Several works further study how masks and positional encodings affect length generalization and positional processing. \citet{Yun2020OnCA} analyze the approximation power of transformers under different masking schemes, while \citet{Wu2024OnTR} study how attention masks affect rank collapse. For positional encodings, \citet{kazemnejad2023impact} show that many commonly used positional encodings fail to generalize to longer contexts, while NoPE can generalize better in some cases. Mitigation strategies, including alternative positional encodings~\citep{kazemnejad2023impact,Zhang2024FoundIT}, masking modifications~\citep{Wang2024EliminatingPB,Fan2025InvICL}, and bootstrapping methods~\citep{Hou2023LargeLM}, have been proposed, but are often task-specific and empirically driven.

Our work is complementary to this line of research. Rather than focusing only on architectural sources of positional information, we study how positional structure is learned from the data. Since attention is designed to learn relevant interactions from the training distribution, its positional behavior should depend not only on the positional encoding or mask, but also on the positional structure that the data makes useful. We formalize this idea for RoPE by showing that learned frequency usage is shaped by the data-induced dependency profile, and that this data-side structure also determines when position interpolation supports length generalization.

\section{Proof of \Cref{thm:field_constrained_optimal_frequency}}
\label{proof:field_constrained_optimal_frequency}
\begin{proof}
First, \(U(\theta;K)\) is well-defined. For every \(r\) and \(\theta\),
\[
    -1\le \cos(\theta r)\le 1,
\]
so
\[
    0\le 1-\cos(\theta r)\le 2.
\]
Since \(P_K\) is a probability measure,
\[
    0\le U(\theta;K)\le 2.
\]

For fixed \(r\), the function
\[
    \theta\mapsto 1-\cos(\theta r)
\]
is differentiable, with derivative
\[
    \frac{\partial}{\partial \theta}
    \bigl(1-\cos(\theta r)\bigr)
    =
    r\sin(\theta r).
\]
For every \(r\in[0,W]\),
\[
    |r\sin(\theta r)|\le r\le W.
\]
The derivative is therefore dominated by the \(P_K\)-integrable function \(W\). By dominated convergence, we may differentiate under the integral sign:
\[
    \frac{d}{d\theta}U(\theta;K)
    =
    \int_0^W r\sin(\theta r)\,dP_K(r).
\]

Now fix
\[
    \theta\in\left(0,\frac{\pi}{W}\right).
\]
For every \(r\in[0,W]\),
\[
    0\le r\le W.
\]
Multiplying by \(\theta>0\), we obtain
\[
    0\le \theta r\le \theta W.
\]
Since \(\theta<\pi/W\),
\[
    \theta W<\pi.
\]
Therefore
\[
    0\le \theta r<\pi
\]
for every \(r\in[0,W]\). On \([0,\pi]\), the sine function is nonnegative, and on \((0,\pi)\) it is strictly positive. Hence
\[
    \sin(\theta r)\ge 0
\]
for all \(r\in[0,W]\), and
\[
    \sin(\theta r)>0
\]
for all \(r>0\).

Thus the integrand \(r\sin(\theta r)\) is nonnegative on \([0,W]\) and strictly positive on \((0,W]\). Since
\[
    P_K((0,W])>0,
\]
the integral is strictly positive:
\[
    \frac{d}{d\theta}U(\theta;K)
    =
    \int_0^W r\sin(\theta r)\,dP_K(r)
    >0.
\]
Therefore \(U(\theta;K)\) is strictly increasing on
\[
    \left(0,\frac{\pi}{W}\right).
\]

Since \(U\) is continuous on the closed interval
\[
    \left[0,\frac{\pi}{W}\right],
\]
and strictly increasing on its interior, its unique maximizer over the admissible interval is the right endpoint:
\[
    \theta^\star=\frac{\pi}{W}.
\]
\end{proof}

\section{Proof of \Cref{thm:PI}}
\label{proof:PI}

\begin{proof}
By definition,
\[
    U\left(\frac{\theta}{\alpha}, \alpha W\right)
    =
    \int_0 ^{\alpha W}
    \left(
    1-\cos\left(\frac{\theta}{\alpha}r\right)
    \right)
    \,dP_{\alpha W}(r).
\]
Since
\[
    P_{\alpha W}=(S_\alpha)_\#P_W,
\]
for any measurable function \(f\),
\[
    \int f(r)\,dP_{\alpha W}(r)
    =
    \int f(\alpha s)\,dP_W(s).
\]
Choose
\[
    f(r)
    =
    1-\cos\left(\frac{\theta}{\alpha}r\right).
\]
Then
\[
\begin{aligned}
    U\left(\frac{\theta}{\alpha},\alpha W\right)
    &=
    \int
    \left(
    1-\cos\left(\frac{\theta}{\alpha}\alpha s\right)
    \right)
    \,dP_W(s)\\
    &=
    \int
    \left(
    1-\cos(\theta s)
    \right)
    \,dP_W(s)\\
    &=
    U(\theta,W).
\end{aligned}
\]
\end{proof}

\section{Experiments}

\subsection{Overview of block-structured retrieval data and experimental setup}
\label{app:block}

To isolate how the positional scale of the data affects RoPE frequency usage, we construct a controlled block-structured sequence task. We denote the sequence length by \(T=4096\). Each sequence is partitioned into contiguous latent blocks of length \(B\), where
\[
B \in \{32,64,128,256,512,1024,2048\}.
\]
Thus, each sequence contains \(T/B\) latent blocks. For each block \(j\), we sample a latent value \(s_j\) uniformly from \(\{+1,-1\}\). Each observed token \(x_t\) is generated by adding independent unit Gaussian noise to the latent value of its block:
\[
x_t = s_{b(t)} + \eta_t,
\qquad
\eta_t \sim \mathcal N(0,1),
\]
where \(b(t)\) denotes the block index containing position \(t\). Therefore, tokens within the same block share a common latent value but differ due to independent noise.

The prediction task is fixed across all experiments. Given the context up to the current position, the model is trained to retrieve the token at a fixed offset \(\Delta=16\). That is, for each valid position \(t>\Delta\), the target is
\[
x_{t-\Delta}.
\]
We use the same offset \(\Delta=16\) for all choices of \(B\). Hence, the only data parameter varied across experiments is the block length \(B\), which controls the range over which information persists across positions in the sequence.

We train a two-layer attention-only network on this task and measure the RoPE frequency spectrum in the second layer. For RoPE, we use the same frequency grid as in the main experiments:
\[
\theta_m = 10000^{-2(m-1)/d},
\]
as in~\citet{su2023roformerenhancedtransformerrotary}.

To summarize frequency usage, we compute the log-mean effective frequency
\[
\theta_{\mathrm{eff}}
=
\exp\left(
\sum_m \omega_m \log(\theta_m)
\right),
\qquad
\omega_m
=
\frac{E_m}{\sum_{m'} E_{m'}},
\]
where \(E_m\) denotes the measured RoPE energy at frequency \(\theta_m\). This quantity provides a single-number summary of where the learned frequency spectrum concentrates on the logarithmic RoPE frequency grid.

\paragraph{Training details} All models were trained on a Tesla V100 GPU. In all experiments, we used the AdamW optimizer~\cite{Loshchilov2017DecoupledWD} with learning rate \(10^{-3}\), weight decay \(5\times10^{-4}\), batch size \(32\), and trained for \(200\) iterations.

\subsection{Overview of key--value retrieval data and experimental setup}
\label{app:key-value}

Following~\citet{Reddy2023TheMB}, we consider a key--value retrieval task in which the model is trained to predict the label \(y_{\mathrm{query}}\) of a target item \(x_{\mathrm{query}}\) from an alternating sequence of \(n\) items and \(n\) labels:
\[
x_1, y_1, \ldots, x_n, y_n, x_{\mathrm{query}} .
\]
The model is trained with cross-entropy loss. Each sequence is embedded in \(d\) dimensions. Each item \(x_i\) is sampled from a Gaussian mixture model with \(K\) classes, and its associated label \(y_i\) is assigned before training from a label vocabulary of size \(L\), where \(L \leq K\). The burstiness parameter \(B\) denotes the number of occurrences of items from a given class within an input sequence. Importantly, each sequence contains at least one context item from the same class as the query.

The input distribution is controlled by several parameters. In addition to the parameters described in the main text, each Gaussian mixture class \(k\) is defined by a \(d\)-dimensional vector \(\mu_k\), whose entries are sampled i.i.d. from a normal distribution with mean zero and variance \(1/d\). Given class \(k\), the item embedding is generated as
\[
x_i = \frac{\mu_k + \epsilon \eta}{\sqrt{1+\epsilon^2}},
\]
where \(\eta\) is sampled from the same distribution as the class means, and \(\epsilon\) controls the within-class variability. Each class is assigned to one of \(L\) labels, with the label embeddings sampled before training from the same distribution as the class means.

\citet{Reddy2023TheMB} showed that different choices of the data-generating parameters lead to distinct learning regimes. To ensure that the model has strong information retrieval ability, we use the configuration identified by~\citet{Reddy2023TheMB} as corresponding to difficult in-weight learning and easy in-context learning. Specifically, we set \(\gamma=0.75\), \(K=2048\), \(L=32\), and \(B=4\).

For RoPE, we set
\[
\theta_i = 10000^{-2(i-1)/d},
\]
as in~\citet{su2023roformerenhancedtransformerrotary}.

\paragraph{Training details}
All models were trained on a Tesla V100 GPU. In all experiments, we used the AdamW optimizer~\cite{Loshchilov2017DecoupledWD} with learning rate \(10^{-3}\), weight decay \(10^{-6}\), batch size \(128\), and trained for \(100{,}000\) iterations.

\subsection{Overview of iGSM data and experimental setup}
\label{app:iGSM}
The iGSM dataset~\cite{YXLA2024-gsm1} is a synthetic dataset designed to mimic the structural patterns and reasoning processes of gsm8k~\cite{cobbe2021gsm8k}. It is generated via a programmatic pipeline that constructs problems from an underlying dependency graph, where nodes represent parameters and edges encode compositional relationships. Each generated instance consists of a problem statement, a step-by-step solution, and a final answer, all derived from the same latent structure. This design enables precise control over problem complexity and reasoning depth. 

We adopt iGSM to study how data influences frequency usage, as it is both scalable--making it suitable for mutual information estimation and pretraining--and flexible, allowing fine-grained control over the effective receptive field by adjusting the number of operations (ops) used during data generation. An example from iGSM is shown in~\Cref{fig:example_iGSM}.

\begin{figure*}[htbp]
    \centering
    \begin{tcolorbox}[title=iGSM dataset]
    \small
    \textbf{Problem:}\\
    The number of each Penguin Beach's Giraffe equals 6. The number of each Octopus Den's Leopard equals each Octopus Den's Giraffe. The number of each Rockpool Exhibit's Leopard equals 20 more than the sum of each Octopus Den's Giraffe and each Octopus Den's Leopard. The number of each Rockpool Exhibit's Giraffe equals 8 times the sum of each Octopus Den's Giraffe and each Octopus Den's Leopard. The number of each Octopus Den's Giraffe equals 21. How many Animals does Penguin Beach have?

    \vspace{0.6em}
    \textbf{Answer:} 6

    \vspace{0.6em}
    \textbf{Solution:}\\
    Let the number of Giraffes at Penguin Beach be $e$, so $e = 6$. Let the number of Animals at Penguin Beach be $J$, then $J = e = 6$.
    \end{tcolorbox}
    \caption{An example from the iGSM dataset}
    \label{fig:example_iGSM}
\end{figure*}

In our controlled setting, we generate datasets with a fixed number of operations. For example, iGSM-2 denotes data generated with ops $= 2$. We fix the number of edges in the underlying structure to $\left\lfloor \text{ops} \cdot \frac{3}{2} \right\rfloor + 1$. As the number of operations increases, both the problem descriptions and the corresponding reasoning traces become longer, resulting in broader dependency width. We directly concatenate the question with the solution, separated by “\verb|\n\n|”.

\paragraph{Training details} All models were trained on a NVIDIA L40 GPU. We pretrained 12-layer GPT-style Transformer models using the nanochat \cite{nanochat} framework. In each experiment, we followed the default data scaling configuration provided by the framework: a batch size of 524{,}288 tokens is used for 2{,}205 steps, corresponding to approximately 1.2B tokens in total. The context length was set to 2048, and we used a BOS-aligned dataloader with Best-Fit Cropping. We used $\theta_{base}=100,000$. All other hyperparameters were kept at their default values.

\subsection{Measuring mutual information}
\label{app:MI}
We estimate mutual information (MI) on natural language using the ClimbMix-400B dataset \cite{diao2025climb} by sampling several random archors $i$, computing empirical distributions $p(x_i), p(x_j), p(x_i, x_j)$ over all documents and applying the standard definition of MI. To reduce bias in finite-sample estimation, we apply the Miller--Madow correction \cite{miller1955note}.

To study the effect of tokenizer vocabulary size, we train a separate BPE tokenizer on the ClimbMix data for each chosen vocabulary size. We then tokenize the same set of documents with each tokenizer and compute the corresponding MI estimates.

When evaluating on iGSM, we follow the same procedure with one modification: we realign each example using the problem–solution boundary and restrict anchor positions to tokens in the solution and target positions $j$ to tokens in the problem. This setup focuses on dependencies from the problem to the solution. We then normalize the estimated mutual information into a probability distribution and compute its cumulative distribution function (CDF), which is used to construct Q–Q plots for comparison across datasets.

\subsection{Testing position interpolation on arithmetic tasks}
\label{app:PI_arithmetic}
We construct a synthetic arithmetic dataset to evaluate position interpolation in non self-similar task.

\paragraph{Data generation}
Each sample is generated as follows:
\begin{itemize}
\item[(1)] uniformly sample an operator $\text{op} \in \{+, -, \times, \div\}$(where $\div$ denotes integer division);

\item[(2)] uniformly sample the number of digits $\text{digit}_a, \text{digit}_b \in \{1,2,3,4,5\}$;

\item[(3)] uniformly sample two integers $a$ and $b$ with $\text{digit}_a$ and $\text{digit}_b$ digits, respectively.
\end{itemize}
Let $s$ denote the corresponding correct result. Each example is then formatted as:
\[
\texttt{a op b = s.}
\]
We concatenate independently generated samples into a single sequence until reaching a target context length $L \in \{2048,4096,8192,16384\}$. This produces long sequences containing multiple arithmetic expressions. 

\paragraph{Evaluation protocol}
The evaluation was done on a NVIDIA Tesla T4 GPU. 
To isolate positional effects and avoid error accumulation from autoregressive decoding, we did not perform step-by-step generation. Instead, we ran a single forward pass over the entire concatenated sequence. For each answer span, a prediction was counted as correct only if the model's argmax output matched the ground-truth token at every position in that span (i.e., exact match over the full answer).
We mapped each answer span to its absolute position in the sequence and partitioned the context range into disjoint buckets. For each bucket, we computed the average accuracy over all answer spans whose tokens fell into that bucket. The final results are averaged over 100 random sequences. 

\section{Examples of $K(r)$}

We now instantiate the positional dependency kernel \(K(r)\) in several settings considered in the work.

\paragraph{Block-structured data}
Consider block-structured data in~\Cref{sec:motivation_training_data}:
\[
    x_t=z_{{block}(t)}+\epsilon_t,
\]
where each block has length \(B\), each block mean satisfies
\[
    \mu_b\in\{-1,+1\},
\]
and \(\epsilon_t\) is independent unit Gaussian noise. Two positions at distance \(r\) share the same block mean only if they lie in the same block. Averaging over all pairs at distance \(r\), the probability of being in the same block is approximately
\[
    \left(1-\frac{r}{B}\right)_+.
\]
Thus the autocorrelation or relevance kernel has the triangular form
\[
    K_B(r)
    \propto
    \left(1-\frac{r}{B}\right)_+.
\]
Its support is \([0,B]\), so \(W=B\) and \Cref{thm:field_constrained_optimal_frequency} predicts
\[
    \theta^\star
    =
    \frac{\pi}{B}.
\]
Thus increasing the block length shifts useful RoPE frequencies toward lower values.

\paragraph{Concatenated question-answer data}
Consider sequences of the form
\[
    q_1a_1q_2a_2\cdots q_na_n,
\]
where each answer \(a_i\) depends on the preceding question span \(q_i\). If the question length is \(Q\), then for predicting \(a_i\), the relevant information lies over a window of width approximately \(Q\). Therefore the positional dependency kernel is approximately
\[
    K_Q(r)
    \approx
    \mathbf 1\{0\le r\le Q\}.
\]
Its field width is \(W=Q\), giving
\[
    \theta^\star
    \asymp
    \frac{1}{Q}.
\]
Thus longer question spans induce lower-frequency RoPE usage.

\paragraph{Key-value lookup}
Consider associative key-value lookup data
\[
    (x_1,y_1,x_2,y_2,\ldots,x_n,y_n),
\]
where a query token \(x\) must retrieve the corresponding value \(y_i\). The total search range may grow with \(n\), but the position-dependent part of the computation is the local binding relation
\[
    x_i\mapsto y_i.
\]
If \(y_i\) immediately follows \(x_i\), then the relevant positional dependency kernel is concentrated at offset \(1\):
\[
    K_{\operatorname{bind}}(r)
    \approx
    \delta_1(r).
\]
Thus the positional field width is small, \(W\approx 1\), and the theory predicts high-frequency usage.

This explains why key-value lookup tasks can induce high-frequency RoPE energy even when the context contains many pairs: content identifies the matching key, while position is primarily used to bind each key to its adjacent value. The relevant positional scale is the key-value gap, not the number of pairs.

If the key-value gap is increased to \(g\), so that the value lies \(g\) tokens after the key, then
\[
    K_{\operatorname{bind}}(r)
    \approx
    \delta_g(r),
\]
and the predicted frequency scale decreases as
\[
    \theta^\star
    \asymp
    \frac{1}{g}.
\]

\paragraph{Natural language as a mixture of fields}
Natural language does not generally have a single dependency scale. It contains local collocations, phrase-level structure, clause-level dependencies, sentence-level coherence, entity tracking, and discourse-level dependencies. Thus its positional dependency kernel is better viewed as a mixture over field sizes:
\[
    K_{\operatorname{NL}}(r)
    =
    \int K_W(r)\,d\nu(W),
\]
where \(K_W\) is a dependency kernel of field width \(W\), and \(\nu\) is a mixing distribution over scales.

Under the inverse-frequency principle
\[
    \theta\asymp \frac{1}{W},
\]
a mixture over field sizes could produce a mixture over useful frequencies. This explains why real language models can exhibit mid-low frequency usage band rather than a single sharp peak. Local \(n\)-gram structure may require some high-frequency components, while broader phrase, sentence, and discourse dependencies may dominate the aggregate RoPE spectrum.

\section{Additional Experimental Results}
\label{app:exp_add}

\begin{figure}[h]
\centering
\includegraphics[width=\linewidth]{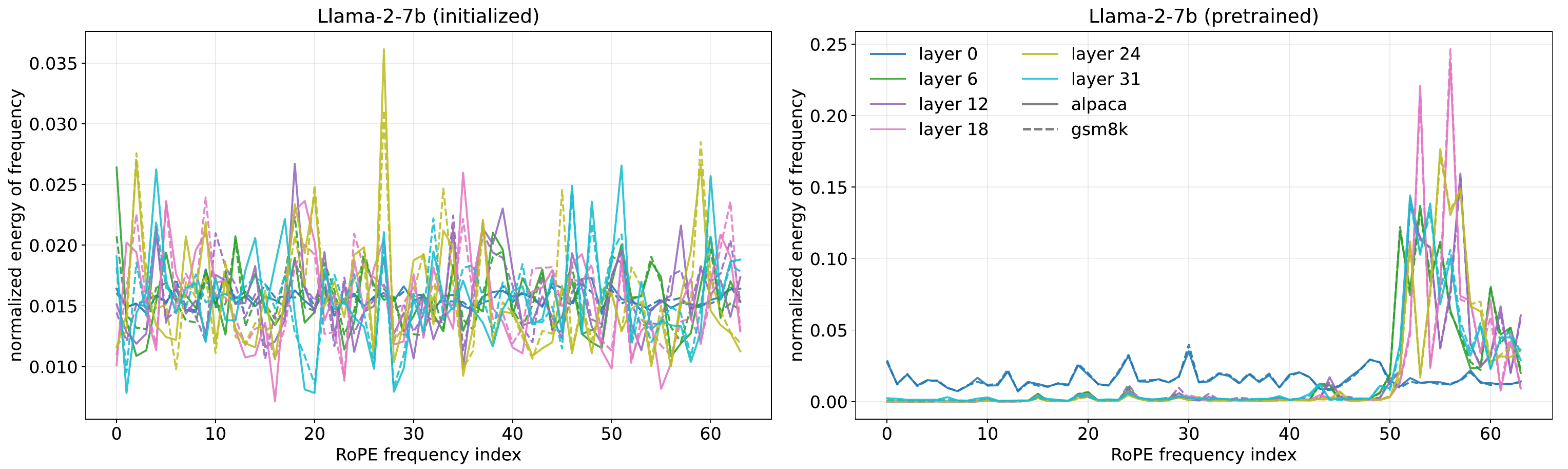}
\caption{\small Frequency energy (averaged over heads) of Llama-2-7B model tested on alpaca and gsm8k datasets. The frequency usage concentrates on low frequencies after training and remains stable across prompts. }
\label{fig:input_stable_llama}
\end{figure}

\subsection{RoPE frequency spectra are stable across model families}
\label{app:input_stable_llama}

We repeat the frequency-energy analysis from~\Cref{sec:motivation_training_data} on Llama-2-7B~\cite{touvron2023llama}. Specifically, we compute \(E_m\) on Alpaca~\cite{alpaca} and GSM8K~\cite{cobbe2021gsm8k}; the results are shown in~\Cref{fig:input_stable_llama}. The same qualitative pattern observed in~\Cref{fig:input_stable} also appears here. After training, the RoPE frequency spectrum is highly non-uniform, with most energy concentrated in a restricted range of low frequencies. Moreover, the spectra are largely stable across inputs and across domains, suggesting that RoPE frequency usage is encoded in the learned query-key weights rather than selected independently for each prompt. These results indicate that learned, input-stable RoPE frequency usage is not specific to a single model family. For additional examples across model families, see~\citet{Jin2025MassiveVI}.

\subsection{Perplexity and long-context arithmetic evaluation}
\label{app:exp_add_ppl}

We further evaluate whether perplexity reliably reflects long-context ability in the arithmetic task. The model we use, Llama-2-7B~\cite{touvron2023llama}, is pretrained with a context length of \(4096\). During evaluation, we test the original model and PI-extended variants with context lengths \(8192\) and \(16384\), and report perplexity over different evaluation ranges, following the protocol of~\citet{chen2023extending}. 

\begin{table}[h]
\centering
\small
\caption{Perplexity of Llama-2-7B with PI under different context lengths on arithmetic task. The pretraining context length is 4096. }
\begin{tabular}{ccccc}
\toprule
& \multicolumn{4}{c}{Evaluation Context Window Range}\\
Context Window & [0,2048) & [2048,4096) & [4096,8192) & [8192,16384) \\
\midrule
4096 (No PI)  & 2.09  & 2.05 & >100 & >100 \\
8192 (PI)  & 5.26  & 4.70 & 4.39 & -- \\
16384 (PI) & 18.37  & 16.54 & 15.80 & 15.38 \\

\bottomrule
\end{tabular}
\label{tab:pi_ppl_arithmetic}
\end{table}

\begin{table}[h]
\centering
\small
\caption{Perplexity of Llama-7B~\cite{touvron2023llama1} on natural language, reported in Table~1 of \citet{chen2023extending}. The pretraining context length is 2048. }
\begin{tabular}{ccccc}
\toprule
& \multicolumn{4}{c}{Evaluation Context Window Range}\\
Context Window & [0,2048) & [2048,4096) & [4096,8192) & [8192,16384)\\
\midrule
2048 (No PI)  & 7.20  & >100 & >100 & >100 \\
8192 (PI)  & 7.13  & 6.96 & 6.95 & -- \\
16384 (PI) & 7.11  & 6.93 & 6.82 & 6.83 \\

\bottomrule
\end{tabular}
\label{tab:pi_ppl_NL}
\end{table}
As shown in~\Cref{tab:pi_ppl_arithmetic}, without PI, perplexity degrades sharply once the context length exceeds the pretraining length. In contrast, PI substantially lowers perplexity at longer context lengths. For example, at context length \(8192\), PI reduces perplexity from over \(100\) to \(4.39\), and at context length \(16384\), it reduces perplexity from over \(100\) to \(15.38\).

However, lower perplexity does not necessarily imply successful long-context reasoning. In this arithmetic task, exact answer accuracy remains low even when PI improves perplexity. Inspecting the answer tokens reveals the issue: at context length \(16384\), the answer-token perplexity under PI is approximately \(16\), which is much larger than the original model and even worse than the perplexity \(10\) of a uniform guess over digits \(0\)--\(9\). This contrasts sharply with natural language evaluation in~\Cref{tab:pi_ppl_NL}, where PI preserves perplexity close to the original model, suggesting that the underlying capability is largely retained. Thus, although PI improves the overall likelihood assigned by the model, it does not necessarily recover the precise answer.

These results show that perplexity can overstate long-context ability. This is especially problematic for tasks that require precise positional resolution, where assigning moderately high likelihood to locally plausible tokens is insufficient. Our findings are consistent with previous observations that perplexity may be an unreliable proxy for long-context generalization~\cite{fang2024wrong}.

\section{Licenses}

\textbf{Libraries}
\begin{itemize}
    \item transformer \cite{wolf-etal-2020-transformers}: Apache License 2.0
    \item nanochat \cite{nanochat}: MIT License
\end{itemize}

\textbf{Datasets}
\begin{itemize}
    \item ClimbMix-400B \cite{diao2025climb}: CC BY-NC 4.0
    \item iGSM \cite{YXLA2024-gsm1}: MIT License
\end{itemize}

\textbf{Models}
\begin{itemize}
    \item Qwen2.5 series \cite{qwen2.5}: Apache License 2.0
    \item Llama2-7B \cite{touvron2023llama}: Llama 2 Community License Agreement
    \item Llama-7B \cite{touvron2023llama1}: Llama License Agreement
\end{itemize}
\newpage

\end{document}